\def\boldd{\boldsymbol d}
\def\boldf{\boldsymbol f}
\def\boldg{\boldsymbol g}
\def\boldw{\boldsymbol w}
\def\boldx{\boldsymbol x}
\def\boldy{\boldsymbol y}
\def\boldz{\boldsymbol z}
\newcommand\dac[4]{{\mathlarger\alpha}_{#1,#2\to #3,#4}}
\newcommand\ec[1]{{\mathlarger\delta}_{#1}}
\newcommand\eqdef{\overset{\mbox{def}}{=\joinrel=}}
\newcommand\csup[2]{#1^{[#2]}}
\newtheorem{thm}{Theorem}
\newtheorem{cor}[thm]{Corollary}
\title{\LARGE\bf Derivation of the Backpropagation Algorithm Based on Derivative Amplification Coefficients}
\author{Yiping Cheng}
\affil{\vskip -3pt School of Electronic and Information Engineering\\
Beijing Jiaotong University, Beijing 100044, China\\
\texttt{ypcheng@bjtu.edu.cn}}
\begin{abstract}
The backpropagation algorithm for neural networks is widely felt hard to understand, despite the existence of some well-written explanations and/or derivations. This paper provides a new derivation of this algorithm based on the concept of derivative amplification coefficients. First proposed by this author for fully connected cascade networks, this concept is found to well carry over to conventional feedforward neural networks and it paves the way for the use of mathematical induction in establishing a key result that enables backpropagation for derivative amplification coefficients. Then we establish the connection between derivative amplification coefficients and error coefficients (commonly referred to as errors in the literature), and show that the same backpropagation procedure can be used for error coefficients. The entire derivation is thus rigorous, simple, and elegant.
\end{abstract}
\keywords{neural networks; backpropagation; derivative amplification coefficients; machine learning}
\begin{document}

\maketitle

\section{Introduction}
The backpropagation algorithm is extremely efficient for computing the gradient of error functions in machine learning using neural networks. Combined with the stochastic gradient descent method for optimization, it has long been the cornerstone for efficient training of neural networks. However, this algorithm is widely considered by practitioners in the field to be complicated, as they often feel frustrated when they try to have a full understanding of this algorithm. Sure, to be fair, there are several well-written explanations and/or derivations of backpropagation, such as the ones in \cite{haykin99,nielsen15}, but there are still a large portion of readers who still think they are confused, which include me. To be frank, I was nearly totally lost when reading the derivation in \cite{haykin99}, and I also felt frustrated at reading the ``How backpropagation works'' chapter in \cite{nielsen15} since it requires an advanced mathematical tool ``Hadamard product''. I wish to have a simple derivation and I believe there must be one.

I am not a pure freshman to neural networks. In fact, I have some research experience in an unconventional neural network architecture -- fully connected cascade networks, and I have published a paper \cite{cheng2017a} in that area. In that paper I proposed a backpropagation algorithm for the gradient of fully connected cascade networks. However, when I wrote that paper I did not realize that the concepts proposed in it can be used to write a new derivation of the well-known backpropagation algorithm for conventional feedforward neural networks (multilayer perceptrons). It was only recently that I came to the idea that the concept of {\em derivative amplification coefficients\/} may carry over to multilayer perceptrons and a new derivation can possibly be written which may hopefully be more accessible than the existing derivations to a wide audience among neural network practitioners. So this paper is the result of that effort of mine.

The rest of this paper is structured as follows. Section 2 introduces neural networks as nonlinear functions, which also sets out the notation. Section 3 explains why we need to compute the partial derivatives. Section 4 defines derivative amplification coefficients for multilayer perceptrons. Section 5 establishes key results that enable backpropagation for derivative amplification coefficients. Section 6 defines error coefficients and shows that the same backpropagation can be used for error coefficients, which actually appear in the algorithm. We conclude this paper with Section 7.

\section{Neural Network as a Nonlinear Function}

In this paper neural networks are understood as conventional feedforward neural networks, i.e. multilayer perceptrons. There is nothing mysterious in a neural network. It is merely a nonlinear function composed of layers, as depicted below.

\begin{center}
\includegraphics[width=0.4\textwidth]{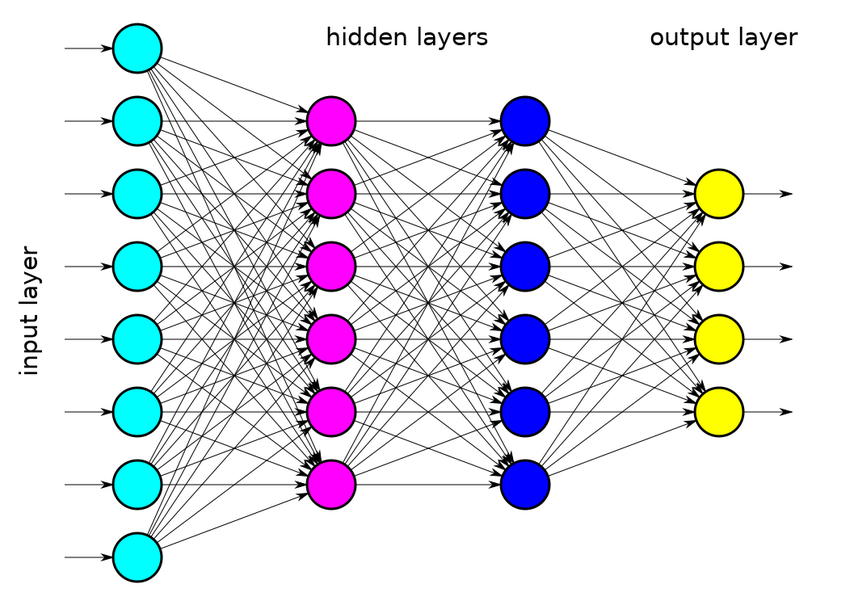}
\end{center}

There is an input layer, a number of hidden layers, and an output layer in a neural network. Each hidden layer receives input from its previous layer and sends its output to its next layer. The input layer has no previous layer and the output layer has no next layer. The input layer does not do any processing, thus it is not counted into the number of layers of a neural network. Therefore when we speak of a ``three-layer neural network'', we mean that the number of hidden layers in the neural network is two.

Consider a neural network with $L$ layers. We designate the input layer as the $0$-th layer, the hidden layers as the $1$-st to $(L-1)$-th layers, and the output layer as the $L$-th layer. Let us denote the input, which is a vector, by \begin{equation} \boldx =(x_1,\cdots,x_n). \end{equation}

We denote the output of the $l$-th layer, which is again a vector, by \begin{equation}\csup{\boldz}{l}=(\csup{z}{l}_1,\cdots,\csup{z}{l}_{h_l}).\end{equation} Then we have \begin{equation}
\csup{\boldz}{0} = \boldx,\quad h_0=n.
\end{equation} Let $q$ be the output dimension, then \begin{equation}
h_L=q.
\end{equation}

Each hidden layer and the output layer produces its output in two steps. The first step is weighted sum and the second step is activation. Let us denote the $l$-th layer weighted sum by \begin{equation}\csup{\boldy}{l}=(\csup{y}{l}_1,\cdots,\csup{y}{l}_{h_l}).\end{equation} Then for each $l$ with $1\le l\le L$, $i$ with $1\le i\le h_l$, we have \begin{equation}
\csup{y}{l}_i=w_{l,i,0}+\sum_{j=1}^{h_{l-1}} w_{l,i,j}\thinspace \csup{z}{l-1}_j
\end{equation}\begin{equation}
\csup{z}{l}_i=\phi_l(\csup{y}{l}_i)
\end{equation} where $\phi_l$ is the activation function for the $l$-th layer. All hidden layers typically use the same activation function, which must be nonlinear. The output layer will typically use a different activation function from that of the hidden layers and is dependent upon the purpose of the network.

Eqs. (6,7) describe a processing unit which processes input from the previous layer to a component of the output vector of this layer. It is called a node, or a neuron, and is represented in the above figure by a dot. Thus the $l$-th layer has $h_l$ neurons. The power of neural networks lies in the fact that a neural network can approximate any smooth function arbitrarily well, provided that it has at least one hidden layer and has a sufficient number of neurons.

The weights $w_{l,i,j}$ are the real parameters of the neural network, and they, together with the integer parameters $L$ and $n, h_1,\cdots,h_L$, uniquely determine the behavior of the network, i.e. the function that it represents. For notational convenience, let us now group the weights into a vector
\begin{equation}\boldw = (\boldw_1,\cdots,
\boldw_L)\end{equation} where for each $l=1,\ldots,L$,
\begin{equation}
\boldw_l =
(\boldw_{l,1},\cdots,\boldw_{l,h_l})
\end{equation} where for each $i=1,\ldots,h_l$,\begin{equation}
\boldw_{l,i} =
(w_{l,i,0},w_{l,i,1},\cdots,w_{l,i,h_{l-1}}).
\end{equation}

Thus, $\boldw$ consists of $\sum\limits_{l=1}^L (1+h_{l-1})h_l$ weights.

\section{The Need for the Partial Derivatives}

Since a neural network defines a multi-input multi-output function $\boldf(\boldx;\boldw)$ ($\boldx$ being the input vector and $\boldw$ being the vector of weights), it has a Jacobian at any point in the Euclidean space, with respect to both $\boldx$ and $\boldw$. Computing of the partial derivatives that form the Jacobian with respect to $\boldw$ is important for all applications of neural networks, as we will shortly show. Let us here take the regression application of neural networks as example.

A regression problem is given a set of input-output data $\{(\boldx^{(k)},\boldd^{(k)})\}_{k=1}^N$, where $N$ is the number of data points, $\boldx^{(k)}\in\mathbb{R}^n$ and $\boldd^{(k)}\in\mathbb{R}^q$, to find a function $\boldf$ from a particular function space that best matches the data. Now suppose that we have chosen the function space to be neural networks with $L$ layers and $h_1,\cdots,h_{L-1}$ neurons for the hidden layers respectively. Now it remains to find the weights. Reasonably, an optimal set of weights should minimize the following error function: \begin{equation}
E(\boldw)={1\over 2}\sum_{k=1}^N ||\boldf(\boldx^{(k)};\boldw)-\boldd^{(k)}||^2={1\over 2}\sum_{k=1}^N \sum_{o=1}^{q} [f_o(\boldx^{(k)};\boldw)-d^{(k)}_o]^2.
\end{equation}

Using such a criterion, the regression problem reduces to an optimization problem. There are two classes of optimization methods: gradient-based and non-gradient-based. For neural network regression problems, gradient-based methods are much faster than non-gradient-based methods. In fact, the method currently prevalent in neural network optimization is {\em stochastic gradient descent\/}, abbreviated as SGD. In SGD, an initial weight vector is randomly chosen and the algorithm continuously updates it incrementally in the direction of the steepest gradient descent based on the newly arrived data $(\boldx^{(k)},\boldd^{(k)})$. The gradient of (11) is additive, i.e., the gradient of the error involving many data samples is the sum of the gradients of the errors each involving a single data sample. Therefore in the sequel let us consider only one data sample $(\boldx^{(k)},\boldd^{(k)})$. In addition, for sake of notational brevity, let us also drop the $\cdot^{(k)}$ superscript. So let us assume that we have data $(\boldx,\boldd)$, the current weight vector is $\boldw$, and the error function is \begin{equation}
E(\boldx;\boldw)={1\over 2}||\boldf(\boldx;\boldw)-\boldd||^2={1\over 2}\sum_{o=1}^{q} [f_o(\boldx;\boldw)-d_o]^2.
\end{equation}

Then the gradient of $E$ with respect to $\boldw$ is composed of the following partial derivatives: \begin{equation}
{\partial E\over \partial w_{l,i,j}}(\boldx;\boldw)=\sum_{o=1}^{q} [f_o(\boldx;\boldw)-d_o]{\partial f_o\over\partial w_{l,i,j}}(\boldx;\boldw).
\end{equation}

We then see that computing the partial derivatives of the error with respect to the weights boils down to computing the partial derivatives of the network output functions with respect to the weights.

\section{Definition of Derivative Amplification Coefficients}

It is perceived by the present author that the existing derivations often confuse functions with their output variables. It is believed by the present author that the variable is not an official mathematical concept but the function is, and the chain rule should be applied to functions, not variables. It is the opinion of the present author that purported applications of the chain rule to variables might make the results problematic or unconvincing, and this is especially the case as neural network functions involve many levels of composition. Therefore, to be mathematically precise and rigorous, we choose to adopt a pure function point of view. So in the following, we define the functions that map $(\boldx;\boldw)$ to $\csup{y}{l}_i$ and $\csup{z}{l}_i$. It is done in a recursive manner.

For each $i$ with $1\le i\le n$,
\begin{equation}
g^{[0]}_i(\boldx;\boldw)\eqdef x_i.
\end{equation} And for each $l$ with $1\le l\le L$, $i$ with $1\le i\le h_l$, \begin{equation}
f^{[l]}_i(\boldx;\boldw)\eqdef w_{l,i,0}+\sum_{j=1}^{h_{l-1}} w_{l,i,j}\thinspace g^{[l-1]}_{j}(\boldx;\boldw),
\end{equation}\begin{equation}
g^{[l]}_i(\boldx;\boldw)\eqdef \phi_l(f^{[l]}_i(\boldx;\boldw)).
\end{equation}

The output variables of the above functions with $(\boldx;\boldw)$ as input are so frequently used in this paper that we have to give them short notations: \begin{equation}
\csup{y}{l}_i\eqdef f^{[l]}_i(\boldx;\boldw)\quad\mbox{and}\quad\csup{\boldy}{l}=(\csup{y}{l}_1,\cdots,\csup{y}{l}_{h_l}),
\end{equation} and \begin{equation}
\csup{z}{l}_i\eqdef g^{[l]}_i(\boldx;\boldw)\quad\mbox{and}\quad\csup{\boldz}{l}=(\csup{z}{l}_1,\cdots,\csup{z}{l}_{h_l}).
\end{equation}

These two notations (17--18) are also consistent with our informal notations (2--7).

Now we are in a position to define the derivative amplification coefficients. Like $\csup{y}{l}_i$ and $\csup{z}{l}_i$, they are also outputs of functions with $(\boldx;\boldw)$ as input, so they vary as $(\boldx;\boldw)$ varies.

For all $1\le l\le r\le L$ and $1\le i\le h_l$, $1\le t\le h_r$, the {\em derivative amplification coefficient\/} from node $(l,i)$ to node $(r,t)$, denoted by $\dac{l}{i}{r}{t}$, is defined recursively as follows:\begin{equation}
\dac{l}{i}{l}{t}\eqdef\left\{\begin{array}{ll}
1, & \mbox{if }i=t, \\
0, & \mbox{otherwise;}\end{array}\right.
\end{equation} and if $\thinspace l<r$,
\begin{equation}
\dac{l}{i}{r}{t}\eqdef\sum_{j=1}^{h_{r-1}} w_{r,t,j}\thinspace\phi'_{r-1} (\csup{y}{r-1}_j)\thinspace \dac{l}{i}{r-1}{j}.
\end{equation}

The role that derivative amplification coefficients play in the computation of partial derivatives is seen in the following theorem.

\begin{thm}
For $1\le l\le r\le L$ and $1\le i\le h_l$, $0\le j\le h_{l-1}$, $1\le t\le h_r$, \begin{equation}
{\partial f^{[r]}_t\over \partial w_{l,i,j}}(\boldx;\boldw)=\dac{l}{i}{r}{t}\cdot {\partial f^{[l]}_i\over \partial w_{l,i,j}}(\boldx;\boldw).
\end{equation}
\end{thm}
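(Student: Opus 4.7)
The plan is to proceed by induction on $r$, starting from $r=l$ and moving up through the layers. This matches the recursive structure of the derivative amplification coefficients and mirrors the fact that $g^{[r]}_t$ itself is defined layer by layer in terms of quantities from layer $r-1$.

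For the base case $r=l$, I would split on whether $t=i$ or $t\neq i$. When $t=i$, the coefficient $\dac{l}{i}{l}{t}=1$ by (21), so the claim reduces to the tautology $\partial g^{[l]}_i/\partial w_{l,i,j} = \partial g^{[l]}_i/\partial w_{l,i,j}$. When $t\neq i$, $\dac{l}{i}{l}{t}=0$, so I need to argue that $g^{[l]}_t$ does not depend on $w_{l,i,j}$ at all. This follows from the recursive definition (16): $g^{[l]}_t$ only involves the weights $w_{l,t,0},\ldots,w_{l,t,h_{l-1}}$ from layer $l$ itself, together with functions $f^{[l-1]}_{j'}$ that in turn depend only on weights from layers $1,\ldots,l-1$. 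Since $t\neq i$, the weight $w_{l,i,j}$ appears nowhere in $g^{[l]}_t$, making the partial derivative zero.

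For the inductive step, assume the claim holds for $r-1$ where $r>l$. I would differentiate the definition
\[
g^{[r]}_t = w_{r,t,0}+\sum_{j'=1}^{h_{r-1}} w_{r,t,j'}\thinspace f^{[r-1]}_{j'}
\]
with respect to $w_{l,i,j}$. Since $l<r$, the weight $w_{l,i,j}$ is distinct from any $w_{r,t,j'}$, so the weights on layer $r$ act as constants and one obtains a sum of $w_{r,t,j'}\,\partial f^{[r-1]}_{j'}/\partial w_{l,i,j}$. Using $f^{[r-1]}_{j'}=\phi_{r-1}(g^{[r-1]}_{j'})$ and the single-variable chain rule yields a factor $\phi'_{r-1}(u_{r-1,j'})$ times $\partial g^{[r-1]}_{j'}/\partial w_{l,i,j}$. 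Applying the induction hypothesis to each such term pulls out the common factor $\partial g^{[l]}_i/\partial w_{l,i,j}$, and the remaining sum $\sum_{j'} w_{r,t,j'}\,\phi'_{r-1}(u_{r-1,j'})\,\dac{l}{i}{r-1}{j'}$ is exactly the recursive definition (22) of $\dac{l}{i}{r}{t}$.

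The main potential obstacle is purely a matter of rigor rather than of genuine difficulty: one must be careful to distinguish the functional object $g^{[r]}_t$ (defined on the combined space of inputs and weights) from the numerical value $u_{r,t}$, and to apply the chain rule only to honest functions, which is precisely the stylistic commitment the paper made just before defining the $f^{[l]}_i$ and $g^{[l]}_i$. Keeping that distinction visible in notation makes the base case argument about $t\neq i$ and the separation of $w_{l,i,j}$ from layer-$r$ weights both routine.
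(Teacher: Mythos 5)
Your proposal is correct and follows essentially the same route as the paper: induction on $r$ with the base case at $r=l$, then differentiating the recursive definition of $g^{[r]}_t$, applying the chain rule and the induction hypothesis, and recognizing the resulting sum as the recursive definition (22) of $\dac{l}{i}{r}{t}$. The only difference is that you spell out the base case (the $t\neq i$ subcase via independence of $g^{[l]}_t$ from $w_{l,i,j}$), which the paper dismisses as obvious; that is a welcome bit of extra rigor, not a divergence.
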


\begin{proof} The result is proved by mathematical induction on $r$. For the base case, i.e. $r=l$, the result obviously holds. Now as induction hypothesis suppose that the result holds when $r=p\ge l$. Then for all $1\le t\le h_{p+1}$, by (15,16), \[f^{[p+1]}_t(\boldx;\boldw)= w_{p+1,t,0}+\sum_{s=1}^{h_{p}} w_{p+1,t,s}\thinspace \phi_p(\thinspace\underbrace{f^{[p]}_{s}(\boldx;\boldw)}_{\csup{y}{p}_s}\thinspace).\]

Note that in the above equation the index of summation is changed from $j$ to $s$, in order to avoid symbol collision with the $j$ in $w_{l,i,j}$. Then \begin{align*}{\partial f^{[p+1]}_t\over \partial w_{l,i,j}}(\boldx;\boldw) &= \sum_{s=1}^{h_{p}} w_{p+1,t,s}\thinspace \phi'_{p}(\csup{y}{p}_s)\thinspace {\partial f^{[p]}_{s}\over \partial w_{l,i,j} }(\boldx;\boldw) & \mbox{by chain rule}\\
&= \sum_{s=1}^{h_{p}} w_{p+1,t,s}\thinspace\phi'_{p}(\csup{y}{p}_s)\thinspace \dac{l}{i}{p}{s}\cdot {\partial f^{[l]}_i\over \partial w_{l,i,j}}(\boldx;\boldw) & \mbox{by induction hypothesis}\\
&= \dac{l}{i}{p+1}{t}\cdot {\partial f^{[l]}_i\over \partial w_{l,i,j}}(\boldx;\boldw). & \mbox{by definition (20)}
\end{align*}

So the result also holds for $r=p+1$, and the inductive step is complete.
\end{proof}

As $\csup{\boldz}{L}$, rather than $\csup{\boldy}{L}$, is generally considered the genuine network output, most readers will feel strange, why our derivative amplification coefficients are about $f$, i.e, about $y$, not $z$? The answer is that while we may also define derivative amplification coefficients so that they are about $z$, the mathematical formulation will be less elegant, and the resulting algorithm will also be different from the commonly known backpropagation algorithm. In fact, we believe that it is more helpful to directly view $\csup{\boldy}{L}$ as the output of the network.

\section{Backpropagation for Derivative Amplification Coefficients}

Apparently, $\dac{l}{i}{L}{o}$ can be used for computing ${\partial f^{[L]}_o\over\partial w_{l,i,j}}(\boldx;\boldw)$. There are $q\sum\limits_{l=1}^{L-1} h_l$ such derivative amplification coefficients. What is more, if we compute them naively based on the definition, then there is a huge computational burden. Thus, it is extremely desirable to find any structural relationships between the coefficients in the hope that the computational cost can be reduced. For inspiration let us look at a 2-3-2-1 network. For this network we have

\[f^{[2]}_1(\boldx;\boldw)=w_{2,1,0}+\sum_{j=1}^{3} w_{2,1,j}\thinspace \phi_1(\thinspace\underbrace{f^{[1]}_j(\boldx;\boldw)}_{\csup{y}{1}_j}\thinspace),\]
\[f^{[2]}_2(\boldx;\boldw)=w_{2,2,0}+\sum_{j=1}^{3} w_{2,2,j}\thinspace \phi_1(\thinspace\underbrace{f^{[1]}_j(\boldx;\boldw)}_{\csup{y}{1}_j}\thinspace),\]
\[f^{[3]}_1(\boldx;\boldw)=w_{3,1,0}+\sum_{j=1}^{2} w_{3,1,j}\thinspace \phi_2(\thinspace\underbrace{f^{[2]}_j(\boldx;\boldw)}_{\csup{y}{2}_j}\thinspace).\]

And \[\dac{1}{2}{2}{1}=w_{2,1,2} \thinspace\phi'_1(\csup{y}{1}_2),\]
\[\dac{1}{2}{2}{2}= w_{2,2,2}\thinspace \phi'_1(\csup{y}{1}_2),\]
\[\dac{1}{2}{3}{1}=\sum_{j=1}^2 w_{3,1,j}\thinspace\phi'_2 (\csup{y}{2}_j)\thinspace \dac{1}{2}{2}{j} = w_{3,1,1}\thinspace\phi'_2 (\csup{y}{2}_1)\thinspace w_{2,1,2} \phi'_1(\csup{y}{1}_2) + w_{3,1,2}\thinspace\phi'_2 (\csup{y}{2}_2)\thinspace w_{2,2,2} \phi'_1(\csup{y}{1}_2),\]
\[\dac{2}{1}{3}{1}=w_{3,1,1}\thinspace \phi'_2(\csup{y}{2}_1),\]
\[\dac{2}{2}{3}{1}=w_{3,1,2}\thinspace \phi'_2(\csup{y}{2}_2).\]

It is a simple matter to verify that \[\dac{1}{2}{3}{1}=\phi'_1(\csup{y}{1}_2)\thinspace ( w_{2,1,2}\thinspace\dac{2}{1}{3}{1}+ w_{2,2,2}\thinspace\dac{2}{2}{3}{1}).\]

It turns out that this equation is not a mere coincidence, and in general we have the following theorem.

\begin{thm}
For $\thinspace 1\le l<r\le L$ and $1\le i\le h_{l}$, $1\le t\le h_r$, \begin{equation}
\dac{l}{i}{r}{t}=\phi_{l}'(\csup{y}{l}_i)\sum_{s=1}^{h_{l+1}} w_{l+1,s,i}\thinspace \dac{l+1}{s}{r}{t}.\end{equation}
\end{thm}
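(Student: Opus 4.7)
The plan is to prove the identity by induction on $r$, with $l$ and $i$ held fixed, since the only recursive relationship we have at hand so far is the defining recurrence (22), which decreases $r$ by one on the left side. The statement is meaningful for $r\ge l+1$, so the base case is $r=l+1$ and the inductive step moves from $r=q$ to $r=q+1$ with $q\ge l+1$.

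For the base case, I would apply (22) to write
$\dac{l}{i}{l+1}{t}=\sum_{j=1}^{h_l} w_{l+1,t,j}\,\phi'_l(u_{l,j})\,\dac{l}{i}{l}{j}$, and then use (21) to observe that only the $j=i$ term survives, giving $w_{l+1,t,i}\,\phi'_l(u_{l,i})$. For the right-hand side of (24), the same collapsing via (21) applied to $\dac{l+1}{s}{l+1}{t}$ leaves only the $s=t$ term, yielding $\phi'_l(u_{l,i})\,w_{l+1,t,i}$, which matches.

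For the inductive step, assume the identity holds at $r=q\ge l+1$. I would start from the definition (22) of $\dac{l}{i}{q+1}{t}$, then substitute the induction hypothesis into each $\dac{l}{i}{q}{j}$ appearing in that sum. This gives a double sum over $j$ and $s$ in which the factor $\phi'_l(u_{l,i})$ and the weights $w_{l+1,s,i}$ are independent of $j$, so they can be pulled out of the inner summation. After swapping the order of summation, the inner sum over $j$ reads $\sum_{j=1}^{h_q} w_{q+1,t,j}\,\phi'_q(u_{q,j})\,\dac{l+1}{s}{q}{j}$, which is exactly what (22) evaluates to for $\dac{l+1}{s}{q+1}{t}$. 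Refolding this inner sum yields $\phi'_l(u_{l,i})\sum_{s=1}^{h_{l+1}} w_{l+1,s,i}\,\dac{l+1}{s}{q+1}{t}$, completing the induction.

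The only real subtlety — which I expect to be the main point requiring care rather than a genuine obstacle — is to recognize that the recurrence (22) must be invoked in two different ways: once at the ``source'' pair $(l,i)$ to unfold a single step, and then once again at the pair $(l+1,s)$ to refold the inner sum over $j$. Keeping the indices $s$, $j$, $t$ distinct (and making sure the summation variable in the inductive hypothesis does not collide with $j$) is the one bookkeeping step that must be done carefully; once that is clear, everything else is mechanical.
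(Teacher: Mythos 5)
Your proposal is correct and follows essentially the same route as the paper's own proof: induction on $r$ with base case $r=l+1$, unfolding $\dac{l}{i}{q+1}{t}$ via the defining recurrence, substituting the induction hypothesis, exchanging the order of summation, and refolding the inner sum via the same recurrence applied at $(l+1,s)$. The bookkeeping points you flag (distinct summation indices, the two distinct invocations of the recurrence) are exactly the ones the paper also handles, so no gap remains.
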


\begin{proof} The result is proved by mathematical induction on $r$.

Base case, i.e. $r=l+1$: By definition (19,20),

\[\dac{l}{i}{r}{t}=\dac{l}{i}{l+1}{t}=\sum_{j=1}^{h_{l}} w_{l+1,t,j}\thinspace\phi'_{l} (\csup{y}{l}_i)\thinspace \dac{l}{i}{l}{j}=w_{l+1,t,i}\thinspace\phi'_{l} (\csup{y}{l}_i),\]

\[\phi_{l}'(\csup{y}{l}_i)\sum_{s=1}^{h_{l+1}} w_{l+1,s,i}\thinspace \dac{l+1}{s}{r}{t}=\phi_{l}'(\csup{y}{l}_i)\sum_{s=1}^{h_{l+1}} w_{l+1,s,i}\thinspace \dac{l+1}{s}{l+1}{t}=w_{l+1,t,i}\thinspace\phi'_{l} (\csup{y}{l}_i).\]

So the result holds for the base case.

Inductive step: Suppose that the result holds when $r=p\ge l+1$. That is, for $1\le i\le h_{l}$, $1\le j\le h_{p}$, we have \begin{equation}\dac{l}{i}{p}{j}=\phi_{l}'(\csup{y}{l}_i)\sum_{s=1}^{h_{l+1}}  w_{l+1,s,i}\thinspace \dac{l+1}{s}{p}{j}.\end{equation}

Then \begin{align*}
\dac{l}{i}{p+1}{t}&=\sum_{j=1}^{h_{p}} w_{p+1,t,j}\thinspace \phi'_{p} (\csup{y}{p}_j)\thinspace\dac{l}{i}{p}{j} & \mbox{by definition (20)}\\
&=\sum_{j=1}^{h_{p}} w_{p+1,t,j}\thinspace \phi'_{p} (\csup{y}{p}_j)\thinspace\phi_{l}'(\csup{y}{l}_i)\sum_{s=1}^{h_{l+1}}  w_{l+1,s,i}\thinspace \dac{l+1}{s}{p}{j} & \mbox{by induction hypothesis (23)}\\
&=\phi_{l}'(\csup{y}{l}_i)\sum_{j=1}^{h_{p}}\sum_{s=1}^{h_{l+1}} w_{p+1,t,j}\thinspace \phi'_{p} (\csup{y}{p}_j)\thinspace w_{l+1,s,i}\thinspace \dac{l+1}{s}{p}{j} &\\
&=\phi_{l}'(\csup{y}{l}_i)\sum_{s=1}^{h_{l+1}}\sum_{j=1}^{h_{p}} w_{p+1,t,j}\thinspace \phi'_{p} (\csup{y}{p}_j)\thinspace w_{l+1,s,i}\thinspace \dac{l+1}{s}{p}{j} &\\
&=\phi_{l}'(\csup{y}{l}_i)\sum_{s=1}^{h_{l+1}} w_{l+1,s,i}\sum_{j=1}^{h_{p}} w_{p+1,t,j}\thinspace \phi'_{p} (\csup{y}{p}_j)\thinspace\dac{l+1}{s}{p}{j} &\\
&=\phi_{l}'(\csup{y}{l}_i)\sum_{s=1}^{h_{l+1}} w_{l+1,s,i}\thinspace\dac{l+1}{s}{p+1}{t}. & \mbox{by definition (20)}
\end{align*}

So the result also holds for $r=p+1$, and the inductive step is complete.
\end{proof}

For our purposes here only the derivative amplification coefficients to the final output layer are of interest. Therefore only a special case $r=L$ of the above theorem is actually used:

\begin{cor}
For $\thinspace 1\le l< L$ and $1\le i\le h_{l}$, $1\le o\le p$, \begin{equation}
\dac{l}{i}{L}{o}=\phi_{l}'(\csup{y}{l}_i)\sum_{s=1}^{h_{l+1}} w_{l+1,s,i}\thinspace \dac{l+1}{s}{L}{o}.\end{equation}
\end{cor}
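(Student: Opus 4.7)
The plan is to derive this corollary as an immediate specialization of Theorem 2, so no new induction or calculation is required. In equation (24), set $r = L$ and rename the target index $t$ as $o$. The hypothesis $1 \le l < r \le L$ then becomes $1 \le l < L$, and the range $1 \le t \le h_r$ becomes $1 \le o \le h_L = m$, which exactly matches the hypothesis of the corollary. Substituting these into (24) yields equation (26) verbatim, with the factor $\phi_l'(u_{l,i})$ and the weighted sum $\sum_{s=1}^{h_{l+1}} w_{l+1,s,i}\thinspace \dac{l+1}{s}{L}{o}$ appearing unchanged.

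There is essentially no obstacle here, since all of the inductive work has already been carried out in the proof of Theorem 2. The only point worth making explicit is the identification of the output-layer target index: since $h_L = m$ and, by (18), $f^{[L]}_o = f_o$, the specialization to $r = L$ is exactly the case needed to eventually compute the partial derivatives ${\partial f_o}/{\partial w_{l,i,j}}(\boldx;\boldw)$ that drive the backpropagation algorithm. The corollary is therefore obtained by a single application of Theorem 2 with $r = L$, together with this identification, and nothing more needs to be shown.
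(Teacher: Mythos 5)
Your proposal is correct and matches the paper's own treatment: the corollary is stated there precisely as the special case $r=L$ of Theorem 2 (with $t$ renamed to $o$ and $h_L=m$), with no further argument given or needed.
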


Based on Corollary 3, the derivative amplification coefficients to the nodes in the output layer can be computed incrementally layer by layer in the backward direction, with each layer doing much less work than based on the definition directly. The computational cost can therefore be dramatically reduced.

\section{Error Backpropagation}

As said, to find all the partial derivatives ${\partial f^{[L]}_o\over\partial w_{l,i,j}}(\boldx;\boldw)$ for all $l,i,j,o$, a total of $q\sum\limits_{l=1}^{L-1} h_l$ derivative amplification coefficients need to be maintained and computed. Can we make this number smaller? If $q>1$, then the answer is yes. The key idea is that actually we do not need to compute the partial derivatives of output functions with respect to the weights, what we really need to compute is the partial derivatives of the error with respect to the weights. So in this section we shall develop an error-oriented modification of backpropagation. When $q=1$, this error-oriented modification makes no loss (albeit no improvement either), therefore it is also used in that case for sake of style consistency.

In virtually all applications of neural networks, the error (or loss, cost, penalty, etc.) $E$ that we want to minimize, is a function of $\csup{\boldf}{L}(\boldx;\boldw)$. (Note that a function of $\csup{\boldg}{L}(\boldx;\boldw)$ is also a function of $\csup{\boldf}{L}(\boldx;\boldw)$.) Therefore generally we have \begin{equation}
{\partial E\over \partial w_{l,i,j}}(\boldx;\boldw)=\sum_{o=1}^{q} \varepsilon_o(\boldx;\boldw)\cdot {\partial \csup{\boldf}{L}\over\partial w_{l,i,j}}(\boldx;\boldw).\end{equation} We have seen one particular example of (25), which is (12) in the regression setting. The $\varepsilon_o(\boldx;\boldw)$'s can be computed before backpropagation begins.

Now, let us give a definition of error coefficients for the nodes of the neural network.

For all $1\le l\le L$ and $1\le i\le h_l$, the {\em error coefficient\/} of node $(l,i)$, denoted by $\ec{l,i}$, is defined as \begin{equation}
\ec{l,i}\eqdef\sum_{o=1}^{q} \varepsilon_o(\boldx;\boldw)\cdot\dac{l}{i}{L}{o}.
\end{equation}

Note also that $\ec{l,i}$, like $\dac{l}{i}{r}{t}$, also depend on $(\boldx;\boldw)$.

It is apparent that by definition (26) and definition (19), we have for $1\le o\le q$, \begin{equation}
\ec{L,o}= \varepsilon_o(\boldx;\boldw).
\end{equation}

Further,

\begin{thm}
For $1\le l\le L$, $1\le i\le h_l$, and $1\le j\le h_{l-1}$,\begin{equation}
{\partial E\over \partial w_{l,i,j}}(\boldx;\boldw)=\left\{\begin{array}{cl}
\ec{l,i}, & \mbox{if }j=0, \\
\ec{l,i}\cdot \csup{z}{l-1}_j, & \mbox{otherwise.}\end{array}\right.
\end{equation}
\end{thm}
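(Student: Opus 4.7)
The plan is to chain together equation (28), Theorem 1, and the definition (29) of $\ec{l,i}$, and then finish by computing the elementary partial derivative $\partial g^{[l]}_i/\partial w_{l,i,j}$ directly from the recursive definition (16).

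First I would start from (28), which already expresses $\partial E/\partial w_{l,i,j}$ as a weighted sum over the output index $o$ of the partial derivatives $\partial g^{[L]}_o/\partial w_{l,i,j}$. To each of these I would apply Theorem 1 in the special case $r=L$, $t=o$, turning the sum into
\[
\frac{\partial E}{\partial w_{l,i,j}}(\boldx;\boldw)=\sum_{o=1}^{m}\varepsilon_o(\boldx;\boldw)\cdot\phi'_L(u_{L,o})\cdot\dac{l}{i}{L}{o}\cdot\frac{\partial g^{[l]}_i}{\partial w_{l,i,j}}(\boldx;\boldw).
\]
Since the last factor $\partial g^{[l]}_i/\partial w_{l,i,j}$ does not depend on $o$, I would pull it outside the sum. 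What remains in front of it is exactly the definition (29) of $\ec{l,i}$, so the expression collapses to $\ec{l,i}\cdot\partial g^{[l]}_i/\partial w_{l,i,j}(\boldx;\boldw)$.

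The remaining step is to evaluate $\partial g^{[l]}_i/\partial w_{l,i,j}$ from the defining equation (16),
\[
g^{[l]}_i(\boldx;\boldw)=w_{l,i,0}+\sum_{j'=1}^{h_{l-1}}w_{l,i,j'}\thinspace f^{[l-1]}_{j'}(\boldx;\boldw).
\]
Because the functions $f^{[l-1]}_{j'}$ depend only on weights of layers $1,\ldots,l-1$ (which is clear from the recursive construction (15)--(17)), they do not involve $w_{l,i,j}$ at all, and differentiation with respect to $w_{l,i,j}$ picks out a single term of the above expression: it yields $1$ when $j=0$ and $f^{[l-1]}_j(\boldx;\boldw)=z_{l-1,j}$ when $j\ge 1$. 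Substituting these two values gives the two cases in (30).

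There is no real obstacle here; the argument is almost purely bookkeeping once Theorem 1 and definition (29) are in place. The only point that deserves a short sentence of justification is the claim that $f^{[l-1]}_{j'}$ is independent of $w_{l,i,j}$, which I would verify by an immediate induction on $l$ using (15)--(17), or simply note as a consequence of the feedforward structure. Everything else is substitution and factoring.
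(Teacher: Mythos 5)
Your proposal is correct and follows essentially the same route as the paper's own proof: start from (28), substitute Theorem 1 with $r=L$, factor out $\partial g^{[l]}_i/\partial w_{l,i,j}$ to recognize definition (29), and evaluate that last derivative from (16). Your explicit remark that $f^{[l-1]}_{j'}$ is independent of $w_{l,i,j}$ is a small justification the paper leaves implicit, but it does not change the argument.
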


\begin{proof}
It follows from Theorem 1 that for each $o$ with $1\le o\le q$,
\begin{equation}{\partial f^{[L]}_o\over \partial w_{l,i,j}}(\boldx;\boldw)=\dac{l}{i}{L}{o}\cdot {\partial f^{[l]}_i\over \partial w_{l,i,j}}(\boldx;\boldw).\end{equation}

Thus\begin{align*}
{\partial E\over \partial w_{l,i,j}}(\boldx;\boldw)&=\sum_{o=1}^{q} \varepsilon_o(\boldx;\boldw)\cdot {\partial f^{[L]}_o\over\partial w_{l,i,j}}(\boldx;\boldw) & \mbox{by (25)}\\
&=\sum_{o=1}^{q} \varepsilon_o(\boldx;\boldw) \cdot \dac{l}{i}{L}{o}\cdot {\partial f^{[l]}_i\over \partial w_{l,i,j}}(\boldx;\boldw) & \mbox{by (21)}\\
& = \ec{l,i}\cdot {\partial f^{[l]}_i\over \partial w_{l,i,j}}(\boldx;\boldw)&\mbox{by definition (26)}\\
&=\left\{\begin{array}{cl}
\ec{l,i}, & \mbox{if }j=0, \\
\ec{l,i}\cdot \csup{z}{l-1}_j, & \mbox{otherwise.}\end{array}\right. &\mbox{by (15,18)}
\end{align*}
\end{proof}

\begin{thm}
For $\thinspace 1\le l< L$ and $1\le i\le h_{l}$, \begin{equation}
\ec{l,i}=\phi_{l}'(\csup{y}{l}_i)\sum_{s=1}^{h_{l+1}} w_{l+1,s,i}\thinspace \ec{l+1,s}.\end{equation}
\end{thm}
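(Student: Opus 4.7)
The plan is to prove this backpropagation identity for error coefficients by reducing it directly to the analogous identity for derivative amplification coefficients, namely Corollary 3. Since $\ec{l,i}$ is by definition a weighted sum of the $\dac{l}{i}{L}{o}$ over output indices $o$, and Corollary 3 already tells us how to rewrite each $\dac{l}{i}{L}{o}$ in terms of $\dac{l+1}{s}{L}{o}$, the whole statement should fall out by substitution and a swap in the order of summation. No induction is needed; the inductive work has already been absorbed into Corollary 3.

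Concretely, I would start from definition (29) and write
\[
\ec{l,i} = \sum_{o=1}^{m} \varepsilon_o(\boldx;\boldw)\cdot \phi'_L(u_{L,o})\cdot \dac{l}{i}{L}{o}.
\]
Next I would apply Corollary 3, which is valid since $l<L$, to replace $\dac{l}{i}{L}{o}$ by $\phi_l'(u_{l,i})\sum_{s=1}^{h_{l+1}} w_{l+1,s,i}\,\dac{l+1}{s}{L}{o}$. The factor $\phi_l'(u_{l,i})$ is independent of $o$ and $s$, so it can be pulled out of both sums. I would then interchange the finite sums over $o$ and $s$, which is trivially valid, to obtain
\[
\ec{l,i} = \phi_l'(u_{l,i})\sum_{s=1}^{h_{l+1}} w_{l+1,s,i}\left(\sum_{o=1}^{m}\varepsilon_o(\boldx;\boldw)\cdot \phi'_L(u_{L,o})\cdot \dac{l+1}{s}{L}{o}\right).
\]

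Finally I would recognize the parenthesized inner sum as exactly $\ec{l+1,s}$ by applying definition (29) with the pair $(l,i)$ replaced by $(l+1,s)$, yielding the claimed identity. The only thing one has to be slightly careful about is index hygiene when writing down the intermediate double sum and the reordering, but there is no genuine obstacle here: the result is essentially a one-line corollary of Corollary 3, and its real content is that the same layerwise backward recursion that governs derivative amplification coefficients is preserved under the linear combination that defines error coefficients.
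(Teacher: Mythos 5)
Your proposal is correct and follows essentially the same route as the paper's own proof: expand $\ec{l,i}$ via definition (29), substitute Corollary 3, pull out $\phi_l'(u_{l,i})$, interchange the finite sums over $o$ and $s$, and recognize the inner sum as $\ec{l+1,s}$. No gaps; this matches the paper step for step.
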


\begin{proof}
\begin{align*}
\ec{l,i}&=\sum_{o=1}^{q} \varepsilon_o(\boldx;\boldw)\cdot\dac{l}{i}{L}{o}&\mbox{by defintion (26)}\\
&=\sum_{o=1}^{q} \varepsilon_o(\boldx;\boldw)\cdot\phi_{l}'(\csup{y}{l}_i)\sum_{s=1}^{h_{l+1}} w_{l+1,s,i}\thinspace \dac{l+1}{s}{L}{o}&\mbox{by Corollary 3}\\
&=\phi_{l}'(\csup{y}{l}_i)\sum_{o=1}^{q} \varepsilon_o(\boldx;\boldw)\sum_{s=1}^{h_{l+1}} w_{l+1,s,i}\thinspace \dac{l+1}{s}{L}{o}&\\
&=\phi_{l}'(\csup{y}{l}_i)\sum_{o=1}^{q}\sum_{s=1}^{h_{l+1}} \varepsilon_o(\boldx;\boldw)\cdot w_{l+1,s,i}\cdot \dac{l+1}{s}{L}{o}&\\
&=\phi_{l}'(\csup{y}{l}_i)\sum_{s=1}^{h_{l+1}}\sum_{o=1}^{q} \varepsilon_o(\boldx;\boldw)\cdot w_{l+1,s,i}\cdot \dac{l+1}{s}{L}{o}&\\
&=\phi_{l}'(\csup{y}{l}_i)\sum_{s=1}^{h_{l+1}}w_{l+1,s,i}\thinspace\sum_{o=1}^{q} \varepsilon_o(\boldx;\boldw)\cdot \dac{l+1}{s}{L}{o}&\\
& =\phi_{l}'(\csup{y}{l}_i)\sum_{s=1}^{h_{l+1}} w_{l+1,s,i}\thinspace \ec{l+1,s}. &\mbox{by defintion (26)}
\end{align*}
\end{proof}

Based on all the foregoing results, we are now able to provide a full description of the backpropagation algorithm for the regression application.

\begin{algorithm}[h]
\DontPrintSemicolon
\caption{BP\_REG}
\SetKwInOut{Input}{Input}
\SetKwInOut{Output}{Output}
\Input{$\enspace L,n=h_0,h_1,\cdots,h_{L-1},q=h_L,\boldw\in\mathbb{R}^{\sum\limits_{l=1}^L (1+h_{l-1})h_l},\boldx\in\mathbb{R}^n,\boldd\in\mathbb{R}^q$}
\Output{$\enspace {\partial E\over \partial w_{l,i,j}}(\boldx;\boldw)$ for $1\le l\le L$, $1\le i\le h_l$, $0\le j\le h_{l-1}$, where $E$ is defined by (12)}

Compute $\csup{y}{l}_i$ and $\csup{z}{l}_i$ for $1\le l\le L$, $1\le i\le h_l$ based on (14--18)

\For{$o=1,\ldots,q$}{
      $\ec{L,o}=\csup{y}{L}_o-d_o$\;
}

\For{$l=L-1,\ldots,1$}{
    \For{$i=1,\ldots,h_{l}$}{
        $\ec{l,i}=\phi'_l(\csup{y}{l}_i)\sum\limits_{s=1}^{h_{l+1}} w_{l+1,s,i}\thinspace \ec{l+1,s}$\;
    }
}

\For{$l=1,\ldots,L$}{
    \For{$i=1,\ldots,h_l$}{
        ${\partial E\over \partial w_{l,i,0}}(\boldx;\boldw)=\ec{l,i}$\;
        \For{$j=1,\ldots,h_{l-1}$}{
            ${\partial E\over \partial w_{l,i,j}}(\boldx;\boldw)=\ec{l,i}\cdot \csup{z}{l-1}_j$\;
        }
    }
}
\end{algorithm}

We have two remarks for the algorithm: \begin{itemize}
\item This algorithm description is for regression. For other applications of neural networks one can suitably modify line 3. It should not be difficult if one understands the underlying mathematics of the algorithm.

\item Line 1 constitutes the forward propagation part. Lines 2--6 constitute the backpropagation part, which is for computing the $\sum\limits_{l=1}^L h_l$ error coefficients. Lines 7-11 constitute the third finishing part, and we have a much larger freedom in choosing its internal order. The order between the three major parts cannot be altered, except that the third part can be merged into the backpropagation part.
\end{itemize}

\section{Conclusion}
We have presented a new derivation of the classic backpropagation algorithm based on the concept of derivative amplification coefficients first proposed in \cite{cheng2017a}. It is considered that the use of derivative amplification coefficients is essential as a theoretic tool, because without that tool we are unable to use mathematical induction, which is essential in establishing Theorem 2, the center of our derivation. Our new derivation is rigorous, simple, and elegant, and we believe will be helpful for a large portion of practitioners in the neural networks community.

\bibliographystyle{unsrt}

\begin{thebibliography}{1}

\bibitem{haykin99}
S.~Haykin.
\newblock {\em Neural Networks: A Comprehensive Foundation}.
\newblock Prentice Hall, 2nd edition, 1999.

\bibitem{nielsen15}
M.A. Nielsen.
\newblock {\em Neural Networks and Deep Learning}.
\newblock Determination Press, 2015.

\bibitem{cheng2017a}
Y.~Cheng.
\newblock Backpropagation for fully connected cascade networks.
\newblock {\em Neural Processing Letters}, 46(1):293--311, Aug. 2017.

\end{thebibliography}

\end{document}